\documentclass{article} 
\usepackage{iclr2025_conference,times}


\usepackage{amsmath,amsfonts,bm}









\def\eqref#1{equation~\ref{#1}}









\def\1{\bm{1}}










\DeclareMathAlphabet{\mathsfit}{\encodingdefault}{\sfdefault}{m}{sl}
\SetMathAlphabet{\mathsfit}{bold}{\encodingdefault}{\sfdefault}{bx}{n}













\usepackage{hyperref}
\usepackage{url}
\usepackage{graphicx} 
\usepackage{tcolorbox}
\usepackage{amsthm}
\usepackage{adjustbox}
\usepackage{booktabs} 

\title{Just say what you want: only-prompting self-rewarding online preference optimization}


\author{Ruijie Xu$^{\text{1}}$\thanks{This work is completed during Ruijie Xu's internship at ByteDance Inc.}, Zhihan Liu$^{\text{2}}$, Yongfei Liu$^{\text{3}}$, \\ 
\textbf{Shipeng Yan$^{\text{3}}$, Zhaoran Wang$^{\text{2}}$, Zhi Zhang$^{\text{3}}$, Xuming He$^{\text{1}}$} \\
$^{\text{1}}$ShanghaiTech University $^{\text{2}}$Northwestern University $^{\text{3}}$ ByteDance Inc \\
$^{\text{1}}$\texttt{\{xurj2022, hexm\}@shanghaitech.edu.cn}\\
$^{\text{2}}$\texttt{zhihanliu2027@u.northwestern.edu, zhaoranwang@gmail.com}\\
$^{\text{3}}$\texttt{\{liuyongfei.0314, yanshipeng, zhangzhi.joshua\}@bytedance.com}
}

%

\iclrfinalcopy 
\begin{document}

\maketitle

\begin{abstract}
We address the challenge of online Reinforcement Learning from Human Feedback (RLHF) with a focus on self-rewarding alignment methods. In online RLHF, obtaining feedback requires interaction with the environment, which can be costly when using additional reward models or the GPT-4 API. Current self-rewarding approaches rely heavily on the discriminator's judgment capabilities, which are effective for large-scale models but challenging to transfer to smaller ones.
To address these limitations, we propose a novel, only-prompting self-rewarding online algorithm that generates preference datasets without relying on judgment capabilities. Additionally, we employ fine-grained arithmetic control over the optimality gap between positive and negative examples, generating more hard negatives in the later stages of training to help the model better capture subtle human preferences. Finally, we conduct extensive experiments on two base models, Mistral-7B and Mistral-Instruct-7B, which significantly bootstrap the performance of the reference model, achieving 34.5\% in the Length-controlled Win Rates of AlpacaEval 2.0.
\end{abstract}

\section{Introduction}

Reinforcement Learning from Human Feedback (RLHF) is a prevalent technique for Large Language Model (LLM) alignment, ensuring models adhere to human preferences, produce useful and truthful responses, and prevent harmful ones \citep{stiennon2020learning, ouyang2022training, christiano2017deep}. Current RLHF methods are classified into online and offline approaches \citep{rafailov2024direct, xiong2024iterative, meng2024simpo}. The primary distinction is that online methods involve real-time interaction with the environment for feedback, while offline methods rely on pre-existing datasets without environmental interaction. Due to distribution shift challenges in offline settings \citep{liu2023statistical}, our focus is on online approaches.

However, many existing online methods necessitate an auxiliary reward model \citep{dong2024rlhf, zhang2024self} or a potent LLM, like GPT-4 \citep{xiong2024iterative, guo2024direct}, for evaluating responses produced by the current policy. This evaluation entails assessing numerous samples, resulting in a substantial cost when using the GPT-4 API. Moreover, the considerable expense of human annotation makes training an efficient reward model a costly endeavor.

Based on this, researchers have investigated LLM alignment using self-rewarding methods, such as those presented in \citet{yuan2024self,chen2024self,liu2024direct}. However, most self-rewarding approaches still depend on discriminators. For instance, \citet{yuan2024self} propose a policy that serves as both an actor and a judge, using a prompt to score its responses. Despite this, discriminators continue to rely on large-scale models like LLaMA-70B, which is inefficient in terms of both memory usage and inference speed. Moreover, smaller language models lack the necessary discriminative power \citep{jiang2024self}, making it challenging to effectively evaluate preference datasets. We adopt the approach presented in \cite{yuan2024self} and apply it to Mistral-7B, as opposed to their utilization of LLaMA-70B. As depicted in Figure \ref{fig:sr}, the response score distribution is considerably narrow, with more than 80\% of responses receiving a rating of 4 out of 5. This observation highlights the limited judgment capacity of smaller models in distinguishing between high-quality and low-quality responses, thereby hindering their effectiveness in generating preference datasets.

Conversely, we find that utilizing generative capabilities enables the generation of both positive and negative examples. By defining two distinct prefixes—one representing high-quality and the other low-quality outcomes—the model can be guided to produce corresponding results. Our theoretical framework and experimental results support this notion, demonstrating that responses generated with different prefixes display a preference gap that can be harnessed for preference learning.



Furthermore, our self-generation approach enables precise arithmetic control over the optimality gap between positive and negative examples. As training progresses and the policy effectively distinguishes significantly different pairs, we narrow this gap. This promotes the generation of more challenging pairwise examples (hard negatives), compelling the policy to discern subtle distinctions between positives and negatives, ultimately leading to better alignment with intricate human preferences.



In order to validate the efficacy of our proposed approach, we perform comprehensive experiments on two foundational models, namely Mistral-7B and Mistral-Instruct-7B. We employ well-established evaluation metrics, AlpacaEval 2.0 and MT-Bench, to assess the performance. The experimental outcomes reveal that our technique substantially surpasses multiple baseline methods in both AlpacaEval 2.0 and MT-Bench assessments.

The contributions of our method are summarized in three-folds:

\begin{enumerate}
\item We propose a novel online only-prompting self-rewarding alignment framework that leverages generation capability to create preference datasets without needing a discriminator.

      \item We generate fine-grained control over the optimality gap between positive and negative examples, creating more hard negative cases in later training stages to better align the model with complex human preferences.

\item Our method achieves significant performance improvements on AlpacaEval 2.0 and MT-Bench with Mistral-7B and Mistral-Instruct-7B.
\end{enumerate}

\begin{figure}[!t]
    \centering
     \includegraphics[width=0.5\linewidth]{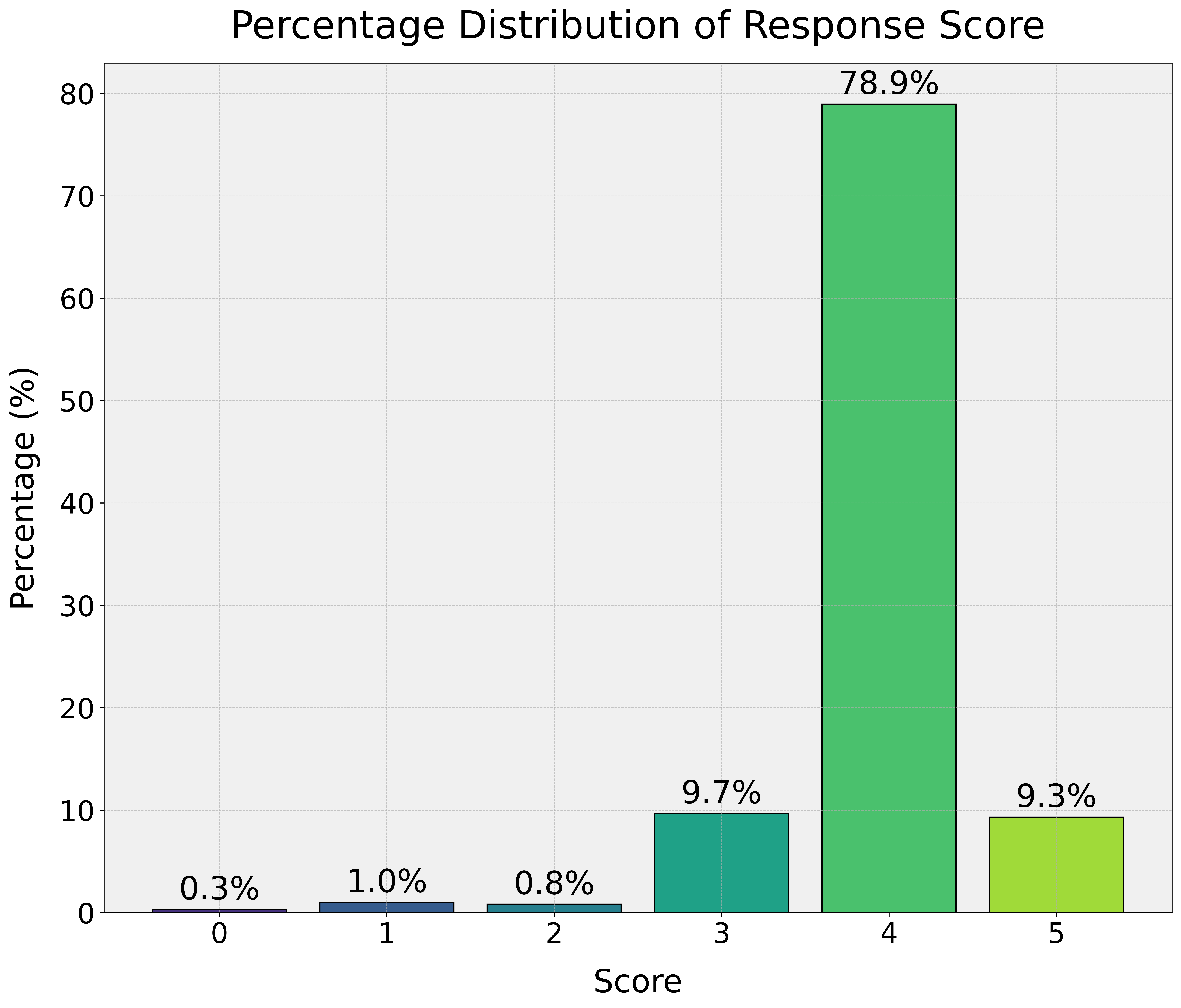}
    \caption{Distribution of scores for responses generated by the self-rewarding algorithm \citep{yuan2024self} on the Mistral-7B model. The prompts come from Ultralfeedback \citep{cui2023ultrafeedback}, totaling approximately 60k. About 80\% of the responses are rated with a score of 4.}
    \label{fig:sr}
\end{figure}

\section{Related work}
\label{relatedwork}
\subsection{Reinforcemant learning from human feedback}
RLHF \citep{bai2022training,christiano2017deep,ouyang2022training,ziegler2019fine} has recently proven crucial in developing state-of-the-art LLMs like ChatGPT \citep{achiam2023gpt} , Gemini \citep{team2023gemini}, and Claude. The standard RLHF framework for LLM alignment is proposed by \citet{ouyang2022training}. They train a reward model on a dataset of human preferences and then fine-tuned a pretrained LLM to maximize the reward from this reward model using the Proximal Policy Optimization (PPO) algorithm \citep{schulman2017proximal, zheng2024toward, liu2023reason}. 
However, PPO-style algorithms are characterized by instability, sample inefficiency, and a high demand for careful hyperparameter tuning \citep{engstrom2020implementation}. Fitting a high-quality reward model also requires substantial human-labeled data. Consequently, these factors lead to prohibitively high computational costs for PPO-based RLHF methods. 

\subsection{Preference optimization}
\subsubsection{Offline Preference optimization}
Therefore, recent research on RLHF has explored several alternatives to PPO-based methods, with direct preference optimization (DPO) \citep{rafailov2024direct} emerging as the most widely used option. DPO is an RL-free algorithm for training language models from preferences. By optimizing a policy using a straightforward binary cross-entropy objective, DPO eliminates the need for reward model training and directly fits an implicit reward model to the preference data. Compared to PPO in RLHF, DPO offers greater stability and reduced computational demands. Several variants of the direct preference learning approach have been proposed \citep{meng2024simpo,azar2024general,ethayarajh2024kto,xu2024contrastive, wu2024self}, each aiming to address additional challenges of direct preference learning from different perspectives.
\subsubsection{Iterative Preference optimization}
Although DPO-style methods have made significant progress, the preference datasets used in DPO typically consist of responses generated by different LLMs. As a result, the policy model does not receive feedback on its own generations during training, leading to a substantial distribution shift between the policy that generated the dataset and the aligned policy \citep{xu2024dpo}. To address this, further studies have extended the approach to an iterative training setup, continuously updating the reference model for optimization and sampling the current policy to generate preference datasets \citep{kim2024sdpo, xiong2024iterative, dong2024rlhf, rosset2024direct}.  However, many of these methods still require an additional reward model or a powerful LLM, such as GPT-4, to label preference data.

\subsection{Reward-free alignment}
Due to the high cost of human annotation, training an effective reward model is very expensive. Some methods attempt to generate preference datasets without external knowledge, such as \citet{yuan2024self}, which use a single model for both policy and judgment. However, these approaches often rely on powerful models like LLaMA-70B, making it challenging for smaller models to achieve the necessary judgment capabilities. \citet{jiang2024self} argue that discrimination is not reliably better than generation, and in fact, performs worse. This suggests that discrimination is a higher-order ability compared to generation. Our approach bypasses discrimination by directly allowing the model to generate preference datasets.
\cite{liu2024direct} propose designing contrastive prompts to generate preference datasets, but their approach is limited to offline settings, and the quality of the generated preference data cannot be guaranteed. 
Self-play finetuning (SPIN) \citep{chen2024self} relies solely on seed SFT data by pairing a ground truth response as a positive sample with a model-generated response as a negative. However, this method assumes that the currently generated responses are always inferior to the original data, which may not accurately reflect reality. Our method is an online, only-prompting self-rewarding preference optimization approach that does not require an additional discriminator for judgment. Instead, it efficiently samples high-quality preference datasets. 

\section{Method}

\begin{figure}[!t]
    \centering
     \includegraphics[width=\linewidth]{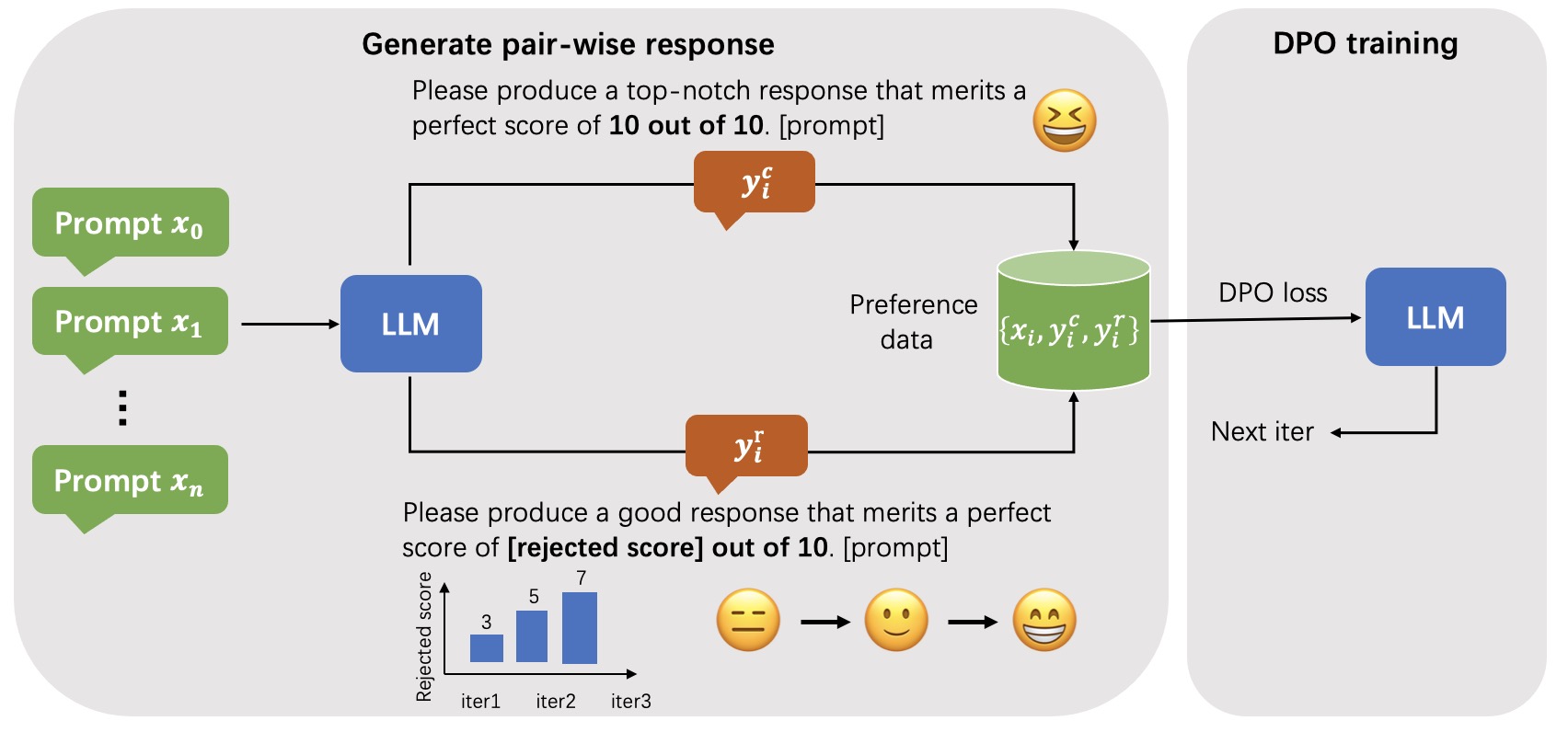}
    \caption{Our method consists of two parts: generating the preference dataset and conducting DPO training. When generating the preference dataset, the input is the prompt \( x_i \). To generate the chosen response \( y^c_i \), we prepend a chosen prefix to \( x_i \), and to generate the rejected response \( y^r_i \), we prepend a rejected prefix to \( x_i \). The final preference dataset is composed of \( \{x_i, y^c_i, y^r_i\} \). This dataset will be used for the current round of DPO training. The trained model from this round will serve as the reference model for the next round.
    }
    \label{fig:frame}
\end{figure}

In this section, we first introduce an overview of our method in Section \ref{sec:over}. We then describe the initialization of our method in Section \ref{sec:ini}, followed by the process of generating the preference dataset in Section \ref{sec:gen}. Subsequently, we mathematically prove the existence of a quality gap between the chosen and rejected response, which can be utilized for DPO training in Section \ref{sec:the}. Finally, we present our iterative training strategy in Section \ref{sec:iter}.


\subsection{Overview}\label{sec:over}
In self-rewarding RLHF tasks, the policy does not interact with an external environment to receive feedback. Current methods rely heavily on a discriminator; however, smaller models have weaker judgment abilities, making them less suitable for such tasks. Moreover, repeatedly sampling for a given prompt leads to inefficient sampling. 
To address the reliance on discriminators and enhance effectiveness on small models, we propose a novel online only-prompting self-rewarding alignment framework. The key ideas of our method include two aspects: (1) directly leveraging the generation capability to create preference datasets without the need for a discriminator; (2) generating fine-grained control over the optimality gap between positive and negative examples, which allows for the creation of more challenging negative cases in later training stages, thereby better aligning the model with complex human preferences. An overview of our framework is depicted in Figure \ref{fig:frame}.

\subsection{Initialization}\label{sec:ini}
In self-rewarding LLM alignment, we essentially align it without relying on any external preference feedback. 
Similar to \cite{yuan2024self}, our method first assumes access to a base pretrained language model and a amount of human-annotated seed data $\{x_{\text{sft}}, x_{\text{dpo}}\}$. Our seed data consists of two parts: one part is instruction-following data used for SFT, and the other part is preference data used for offline preference optimization. We find that applying our method in a bootstrap manner on a model trained with a seed preference dataset can lead to greater performance improvement. 
This is likely because a model that undergoes preference learning gains a better understanding of human preferences, resulting in stronger generative capabilities. Our initialization process aligns with the offline preference optimization procedure, as detailed below:
\begin{enumerate}
\item  Given a pre-trained model \(\theta\), we fine-tune \(\theta\) on a supervised fine-tuning (SFT) dataset \(\mathcal{D}_{\text{sft}} = \{(x_i, y_{\text{target}_i})\}_{i=1}^{N}\) using the following SFT loss function to obtain the model \(\theta_{\text{SFT}}\)
\begin{equation}
    \mathcal{L}_{\text{sft}}(\theta) = \mathbb{E}_{(x, y) \sim \mathcal{D}_{\text{sft}}} \left[ -\log \pi_\theta(y \mid x) \right],
\end{equation}
where  \(\pi_\theta\) represents the policy parameterized by \(\theta\).
\item We apply Direct Preference Optimization (DPO) to the offline preference dataset \(\mathcal{D}_{\text{DPO}}\). The associated loss function is given by

\begin{equation}
\mathcal{L}_{\text{DPO}}(\pi_\theta; \pi_{\text{sft}}) = -\mathbb{E}_{(x, y_w, y_l) \sim \mathcal{D}_{\text{DPO}}} \left[ \log \sigma \left( \beta \log \frac{\pi_\theta(y_w \mid x)}{\pi_{\text{ref}}(y_w \mid x)} - \beta \log \frac{\pi_\theta(y_l \mid x)}{\pi_{\text{ref}}(y_l \mid x)} \right) \right],
\end{equation}
where \(\sigma\) denotes the sigmoid function, \(\pi_{\text{ref}}\) denotes the reference policy, and \(y_l\) and \(y_w\) represent the rejected response and chosen response, respectively.

Note that this loss term can represent either the standard DPO loss or an improved version. Our method is applicable to any DPO-type method.
\end{enumerate}

\subsection{Generate preference dataset}
\label{sec:gen}

To obtain the preference dataset, we explicitly define a response score for each response, with higher scores indicating better quality. We design two different prompts to leverage the generative ability for eliciting both positive and negative examples. The current policy is directly instructed to generate both high-scoring and low-scoring responses, with the high-scoring responses labeled as \textit{chosen} and the low-scoring responses labeled as \textit{rejected}.
Specifically, given an input instruction \(\{x_i\}_{i=1}^N\), the input for generating a chosen response is \(\{(p_c, x_i)\}_{i=1}^N\), while the input for generating a rejected response is \(\{(p_r, x_i)\}_{i=1}^N\). The prefixs \(p_c\) and \(p_r\) are defined as follows:

\begin{tcolorbox}
\textbf{Choosen prefix $p_c$}

Please produce a top-notch response that merits a perfect score of 10 out of 10. [prompt]

\textbf{Rejected prefix $p_r$}

Please produce a good response that merits a perfect score of [rejected score] out of 10. [prompt]
\end{tcolorbox}

By designing such pair-wise prompts, the model generates two responses, \(\{y^c_i\}_{i=1}^N\) and \(\{y^r_i\}_{i=1}^N\). We observe that this straightforward design enables the language model to interpret distinct scoring prompts effectively, thereby producing finely differentiated chosen and rejected responses that align with human preferences.

\subsubsection{Theoretical demonstration}\label{sec:the}
In this section, we provide a theoretical proof that the chosen responses generated in Section \ref{sec:gen} are of higher quality than the rejected responses, making them suitable for constructing preference datasets for subsequent rounds of preference optimization. For simplicity, we denote the input instruction as \(x\), the generated response as \(y \in \mathcal Y\), where $\mathcal Y$ is response space, and the reward function $f$ as  $f:\mathcal{X}\times\mathcal{Y}\mapsto \mathbb{R} $, which evaluates the quality of response \(y\) given instruction \(x\). The policy distribution of the language model, \(\pi(y \mid x)\), determines the probability of generating a specific response \(y\) based on instruction \(x\). 
We consider the policies that share the same support as the reference policy $\pi_{\text{ref}}$, we take a policy class $\Pi$ as 
\begin{equation}
 \Pi =
\left\{\; \pi : \mathcal X \mapsto \Delta(\mathcal Y) \mid \text{Supp}(\pi(\cdot|x)) \subseteq \text{Supp}(\pi_{\text{ref}}(\cdot|x)), \; \forall x \in \mathcal X \right\},
\end{equation}
wher $\text{Supp}(p)$ denotes the support of a probability density function $p$.

\newtheorem{assumption}{Assumption}

\begin{assumption}[The range of $f(x,y)$ is bounded]
\label{assump:example}
For all \( x \) and \( y \), the function \( f(x, y) \) satisfies the bound \( 0 \leq f(x, y) \leq R_{\text{max}} \), where \( R_{\text{max}} \) is a constant.
\end{assumption}

\begin{assumption}[The reward score $r$ given instruction $x$ and response $y$ follows a unimodal distribution]
\label{assump:2} Given any \( x \) and \( y \), the conditional probability \( p(r \mid x, y)  \propto \exp\left(-\gamma \|f(x, y) - r\|^2_\alpha\right) \), where \( \| \cdot \|_\alpha \) denotes the \(\alpha\)-norm. For example, setting \(\alpha = 2\) corresponds to a Gaussian distribution. The parameter \(\gamma\) controls the steepness of the corresponding distribution. As \(\gamma\) increases, the peak of corresponding distribution becomes steeper, leading to a more concentrated distribution. 
\end{assumption}

\newtheorem{proposition}{Proposition}

\begin{proposition}[Quality gap between responses]
\label{prop:example}
Given the instruction $x$ and different reward scores, the policy can generate responses of varying quality.
We denote two different policies as $\pi_{\text{good}}$ and $\pi_{\text{bad}}$, which have the following forms
\begin{equation}
    \pi_{\text{good}} = \pi(\cdot|\cdot,r=R_{\text{max}}),
\end{equation} 
\begin{equation}
    \pi_{\text{bad}} = \pi(\cdot|\cdot,r=R_{0}), R_0 < R_{\text{max}}.
\end{equation}
\end{proposition}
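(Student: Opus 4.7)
The statement, as phrased, is informal, so my first task would be to interpret it as the quantitative claim
\[
\mathbb{E}_{y \sim \pi_{\text{good}}(\cdot\mid x)}[f(x,y)] \;\geq\; \mathbb{E}_{y \sim \pi_{\text{bad}}(\cdot\mid x)}[f(x,y)],
\]
with strict inequality whenever $\pi_{\text{ref}}(\cdot\mid x)$ puts mass on responses with different $f$-values. The plan is to derive this from the unimodal reward model in Assumption~\ref{assump:2} via a monotone likelihood ratio (MLR) argument, which is the cleanest route and also matches the intuition used in the paper.

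The first step is to apply Bayes' rule to rewrite the conditional policy. Treating $\pi_{\text{ref}}(y\mid x)$ as the prior over responses and $p(r\mid x,y) \propto \exp(-\gamma\|f(x,y)-r\|_\alpha^2)$ as the likelihood of the target reward, I obtain
\[
\pi(y\mid x, r) \;=\; \frac{\pi_{\text{ref}}(y\mid x)\,\exp\!\bigl(-\gamma\,\|f(x,y)-r\|_\alpha^2\bigr)}{Z(x,r)},
\]
where $Z(x,r)$ is the normalizing constant. Setting $r = R_{\max}$ gives $\pi_{\text{good}}$ and setting $r = R_0$ gives $\pi_{\text{bad}}$.

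Next I would form the likelihood ratio between the two policies at any fixed $x$:
\[
\frac{\pi_{\text{good}}(y\mid x)}{\pi_{\text{bad}}(y\mid x)} \;\propto\; \exp\!\Bigl(\gamma\bigl[\|f(x,y)-R_0\|_\alpha^2 - \|f(x,y)-R_{\max}\|_\alpha^2\bigr]\Bigr).
\]
The key observation, to be verified by a short calculus argument, is that on the admissible range $f(x,y)\in[0,R_{\max}]$ (Assumption~\ref{assump:example}) with $R_0<R_{\max}$, the exponent is a strictly increasing function of $f(x,y)$: moving $f(x,y)$ closer to $R_{\max}$ simultaneously decreases $\|f-R_{\max}\|_\alpha^2$ and increases $\|f-R_0\|_\alpha^2$. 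Thus the ratio is monotone nondecreasing in the scalar statistic $f(x,y)$, which is precisely the MLR property of the induced distributions of $f(x,y)$ under the two policies.

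From MLR, a standard lemma yields first-order stochastic dominance of the $f$-value distribution under $\pi_{\text{good}}$ over that under $\pi_{\text{bad}}$, and hence the desired inequality on expected rewards, with strict inequality when $f(x,\cdot)$ is non-constant on $\mathrm{Supp}(\pi_{\text{ref}}(\cdot\mid x))$. The main obstacle I foresee is not the probabilistic argument but a bookkeeping subtlety: the $\alpha$-norm notation is used for scalar arguments, and I would first fix the interpretation $\|u\|_\alpha^2 = |u|^\alpha$ (recovering Gaussian at $\alpha=2$) and then verify monotonicity of $|f-R_0|^\alpha - |f-R_{\max}|^\alpha$ in $f$ on $[0,R_{\max}]$ for the allowed range of $\alpha$. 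Once that elementary monotonicity is confirmed, the MLR-to-dominance step is textbook and the proposition follows immediately.
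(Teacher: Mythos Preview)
Your proposal is correct but follows a genuinely different route from the paper. After the common Bayes step, the paper does \emph{not} compare expected $f$-values via a likelihood-ratio argument; instead it embeds $\pi_{\text{good}}$ and $\pi_{\text{bad}}$ into a KL-regularized RLHF objective
\[
J(\pi)=\mathbb{E}_{x,y\sim\pi}\bigl[-\gamma\|f(x,y)-R_{\max}\|_\alpha^2-\beta\,\mathrm{KL}(\pi\|\pi_{\text{ref}})\bigr],
\]
invokes the standard Gibbs-optimality lemma to identify $\pi_{\text{good}}$ as the unique argmax of $J$, and then uses strict concavity of $J$ together with $\pi_{\text{good}}\neq\pi_{\text{bad}}$ to conclude $J(\pi_{\text{good}})>J(\pi_{\text{bad}})$. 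So the paper's ``quality gap'' is a gap in the regularized objective $J$, not in $\mathbb{E}[f]$ as you target. Your MLR/stochastic-dominance argument is more elementary and arguably proves the more natural statement (a gap in expected reward itself, without the KL term), and it even yields distributional dominance rather than just an inequality of means; the paper's argument, by contrast, buys a direct connection to the RLHF optimization viewpoint and sidesteps the monotonicity calculus you flag. One small caution on your sketch: the claim that increasing $f$ ``increases $\|f-R_0\|_\alpha^2$'' fails for $f<R_0$, so the monotonicity of the \emph{difference} $|f-R_0|^\alpha-|f-R_{\max}|^\alpha$ really does need the derivative check you defer, and for $\alpha\le 1$ that check can fail on $[0,R_0)$; restricting to $\alpha>1$ (which covers the Gaussian case) makes your argument go through cleanly.
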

There is a quality difference between \(\pi_{\text{good}}\) and \(\pi_{\text{bad}}\), with the quality of \(\pi_{\text{bad}}\) being slightly lower than that of \(\pi_{\text{good}}\). Therefore, in DPO training, responses generated by \(\pi_{\text{bad}}\) can serve as the rejected responses, while those generated by \(\pi_{\text{good}}\) can serve as the chosen responses. 

\newtheorem{theorem}{Theorem}
\newtheorem{lemma}{Lemma}
\begin{proof}
Proof of Proposition.\ref{prop:example}.
By Bayes' theorem, we have
\begin{equation}
\pi(y|x, r) = \frac{p(r|x, y) \, p(y|x)}{p(r|x)},
\end{equation}

Thus, we obtain
\begin{align}
\pi(y|x, r) &\propto p(r|x, y) \, p(y|x)\\
            &\propto \exp\left(-\gamma{\|f(x, y) - r\|}_\alpha^2\right)  \cdot\pi_{\text{ref}}(y|x).
\end{align}

From \citet{ouyang2022training, xiong2024iterative, liu2024provably}, the standard RLHF objective function is given by
\begin{equation}\label{equ:rlhf}
J(\pi) = \mathbb{E}_{x\sim D, y\sim \pi(\cdot|x)}\bigr[f(x,y) -\beta\cdot \text{KL}(\pi(\cdot\mid x)\|\pi_{\text{ref}}(\cdot\mid x))\bigr].
\end{equation}

Substituting Assumption.\ref{assump:2}, our RLHF objective can be expressed as 
\begin{equation}
J(\pi) = \mathbb{E}_{x\sim D, y\sim \pi(\cdot|x)}\bigr[-\gamma\|f(x,y)-R_{\text{max}}\|_\alpha^2 -\beta\cdot \text{KL}(\pi(\cdot\mid x)\|\pi_{\text{ref}}(\cdot\mid x))\bigr].
\end{equation}

\begin{lemma}[Oracle optimal KL-regularized policy]
\label{lemma:example}
Given any reward model $p \in \mathcal{R}$, the optimal policy $\pi_p$ to
\begin{equation}
    \max_{\pi\in\Pi} \{\mathbb{E}_{x\sim D, y\sim \pi(\cdot|x)}\bigr[-\gamma\|f(x,y)-R_{\text{max}}\|_\alpha^2 -\beta\cdot \text{KL}(\pi(\cdot\mid x)\|\pi_{\text{ref}}(\cdot\mid x))\bigr]\}
\end{equation}

is given by 
\begin{equation}
    \pi_p(\cdot|x)=\frac{1}{Z_p(x)}\cdot\pi_{\text{ref}}\cdot\exp{(\beta^{-1}p(x,\cdot)),Z_p(x)=\int _{a\in\mathcal{A}}\exp(\beta^{-1}p(x,a))\mathrm d\pi_{\text{ref}}(a|x),}
\end{equation}
and correspondingly the optimal value of Equation \ref{equ:rlhf} is given by $\mathbb{E}_{x \sim d_0}\left[\beta \cdot \log(Z_p(x))\right]$.
\end{lemma}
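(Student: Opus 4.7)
The plan is to exploit two standard observations: (i) the objective decouples pointwise in $x$, so it suffices to solve the inner maximization for each fixed prompt; and (ii) the KL-regularized reward maximization can be rewritten as a single KL divergence between $\pi$ and the candidate optimum, plus a term that is constant in $\pi$. Since $\mathbb{E}_{x\sim d_0}$ is a linear average of per-$x$ problems over $\pi(\cdot\mid x) \in \Delta(\mathcal Y)$, I would first fix $x$ and, writing $p(x,y) = -\gamma\|f(x,y)-R_{\max}\|_\alpha^2$ to match the statement, consider
\begin{equation*}
J_x(\pi) \;=\; \mathbb{E}_{y\sim\pi(\cdot|x)}[p(x,y)] \;-\; \beta\cdot\text{KL}\bigl(\pi(\cdot\mid x)\,\|\,\pi_{\text{ref}}(\cdot\mid x)\bigr).
\end{equation*}

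Next I would define the candidate $\pi_p(y\mid x) = Z_p(x)^{-1}\,\pi_{\text{ref}}(y\mid x)\,\exp(\beta^{-1}p(x,y))$ with $Z_p(x)$ as in the lemma, note that $\log\pi_p(y\mid x) = \log\pi_{\text{ref}}(y\mid x) + \beta^{-1}p(x,y) - \log Z_p(x)$, and substitute this identity into $J_x(\pi)$. A direct rearrangement — adding and subtracting $\beta\,\mathbb{E}_\pi[\log\pi_p(y\mid x)]$ — yields the key identity
\begin{equation*}
J_x(\pi) \;=\; -\beta\cdot\text{KL}\bigl(\pi(\cdot\mid x)\,\|\,\pi_p(\cdot\mid x)\bigr) \;+\; \beta\log Z_p(x),
\end{equation*}
where $\log Z_p(x)$ is constant in $y$ and so pulls out of the expectation cleanly.

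The conclusion is then immediate from Gibbs' inequality: $\text{KL}(\pi(\cdot\mid x)\,\|\,\pi_p(\cdot\mid x)) \ge 0$ with equality iff $\pi(\cdot\mid x) = \pi_p(\cdot\mid x)$ $\pi_{\text{ref}}$-almost everywhere, so $\sup_{\pi\in\Pi} J_x(\pi) = \beta\log Z_p(x)$ and is attained by $\pi_p$. Taking the expectation over $x\sim d_0$ delivers the stated optimal value $\mathbb{E}_{x\sim d_0}[\beta\log Z_p(x)]$. The role of the policy class $\Pi$ is to make both sides well defined: the support condition ensures $\text{KL}(\pi\,\|\,\pi_{\text{ref}})$ is finite and, symmetrically, that $\pi_p \in \Pi$ since $\pi_p$ inherits the support of $\pi_{\text{ref}}$. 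I do not expect a real obstacle — the argument is a standard Gibbs variational calculation — but the one item worth checking carefully is that $Z_p(x)$ is finite, which is where Assumption 1 enters: because $p(x,y)$ is bounded (with $|p|\le \gamma R_{\max}^2$ under the stated $\alpha$-norm reward), $\exp(\beta^{-1}p(x,\cdot))$ is bounded and $Z_p(x) \in (0,\infty)$, so $\pi_p$ is a bona fide probability density and $\log Z_p(x)$ is finite.
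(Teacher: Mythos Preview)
Your proof is correct. The Gibbs variational identity
\[
J_x(\pi) \;=\; -\beta\cdot\text{KL}\bigl(\pi(\cdot\mid x)\,\|\,\pi_p(\cdot\mid x)\bigr) \;+\; \beta\log Z_p(x)
\]
is exactly the right rearrangement, and your check that Assumption~1 guarantees $Z_p(x)\in(0,\infty)$ is a careful touch the paper itself does not make explicit.

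The paper, by contrast, does not give an argument at all: its entire proof of this lemma is a one-line citation to Proposition~7.16 and Theorem~15.3 of Zhang's \emph{Mathematical Analysis of Machine Learning Algorithms}. Your approach is therefore not so much a different route as a self-contained unpacking of what the cited results presumably say. The benefit of your version is that it is elementary and transparent---a reader can verify it without chasing the reference---whereas the paper's citation buys brevity and signals that the result is standard. Both are acceptable; yours is the more informative choice for a reader.
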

\begin{proof}
The proof of Lemma.\ref{lemma:example} can be derived directly from Proposition 7.16 and Theorem 15.3 of \citet{zhang2023mathematical}.
\end{proof}
Substituting this, we obtain
\begin{align}\label{equ:9}
    J(\pi_{\text{good}}) &= \max_{\pi} J(\pi).      
\end{align}
By Assumption \ref{assump:2} and Equation \ref{equ:9}, we can obtain
\begin{align}
J(\pi_{\text{good}}) &\geq J(\pi_{\text{bad}}).
\end{align}

Since $\beta > 0$, $\pi_{\text{good}} \neq \pi_{\text{bad}}$, and $J(\pi)$ is a strongly convex function with respect to $\pi$, we obtain
\begin{align}\label{equ:11}
J(\pi_{\text{good}}) > J(\pi_{\text{bad}}).
\end{align}

From Equation \ref{equ:11}, we observe a quality gap in the objective function between \( \pi_{\text{good}} \) and \( \pi_{\text{bad}} \). This indicates that responses sampled from \( \pi_{\text{good}} \) are likely to exhibit higher quality compared to those from \( \pi_{\text{bad}} \), as \( \pi_{\text{good}} \) generally yields a lower loss.

\end{proof}

\subsection{Iterative training strategy}\label{sec:iter}
We apply an iterative framework to optimize the current policy, dividing the optimization process into $M$ iterations. For the first iteration, the reference model is the model $\pi_\theta$ obtained after initialization. For subsequent iterations, the reference model is the model from the end of the previous iteration.

We believe that in the early stages of training, when the policy has not yet aligned with human preferences, pair-wise datasets with a large gap between the chosen and rejected responses are still beneficial. However, as training progresses and the policy increasingly learns human preferences, it becomes easier to distinguish between chosen and rejected responses with a large gap. Feeding the model such simple data at this stage no longer contributes to further learning improvements. To ensure that the model continues to learn complex human preferences rather than merely becoming proficient with basic distinctions, we reduce the gap between pair-wise data in the later stages of training. Specifically, we progressively increase the rejected score of the rejected prefix $p_r$. For instance, in the first iteration, the score is set to 3, in the second iteration to 5, and in the third iteration to 9.
This encourages the generation of more challenging pairwise examples (hard negatives) in the later stages, forcing the policy to detect subtle differences between negatives and positives, thereby aligning with complex human preferences.

\section{Experiment}
\subsection{Expreiment setup}

\paragraph {\textbf{Models and training settings.}} We conduct our experiments on Mistral-7B \footnote{https://huggingface.co/mistralai/Mistral-7B-v0.1} and Mistral-Instruct-7B \footnote{https://huggingface.co/mistralai/Mistral-7B-Instruct-v0.2}. Our training code is based on the alignment-handbook \citep{Tunstall_The_Alignment_Handbook}. We use vLLM \citep{kwon2023efficient} to sample responses.   
Following SimPO \citep{meng2024simpo}, we adopt the training pipeline outlined in their work, utilizing their SFT model. For the base version, we align with Zephyr \citep{tunstall2023zephyr}, and for the instruct version, we use mistralai/Mistral-7B-Instruct-v0.2 as the SFT model. For the initialization dataset, we remain consistent with SimPO. For the base setup, we use the UltraFeedback dataset \citep{cui2023ultrafeedback}. For the instruct version, we use the datasets\footnote{https://huggingface.co/datasets/princeton-nlp/mistral-instruct-ultrafeedback} published by SimPO. Motivated by SimPO, for the second phase of initialization, we apply two tricks from their work: length-normalized reward formulation and target reward margin. We find that these techniques achieve better results compared to naive DPO. Our training parameters are largely consistent with SimPO. However, we observed that as the number of iterations increases, it is necessary to reduce the learning rate. For iterations 2 and 3, we use learning rates of 1e-7 and 1e-8, respectively. In our experimental setup, \( M \) is set to 3.

\paragraph {\textbf{Evaluation benchmarks. }}
We apply the most widely recognized public instruction-following benchmarks, MT-Bench \citep{zheng2023judging} and AlpacaEval 2.0 \citep{alpaca_eval}, to evaluate our method. These benchmarks assess the models' capabilities in handling a diverse range of conversational queries and are well-regarded within the community. Both benchmarks necessitate a judgment model (GPT-4) for scoring. To ensure a fair comparison with previous baselines, we employ GPT-1104-preview \footnote{https://openai.com/api/}, a high-performance variant, for the judgment. 

\paragraph {\textbf{Baseline}}
Our baseline consists of two components: offline preference optimization methods and self-rewarding online optimization methods. For the offline methods, we compare RRHF \citep{yuan2024rrhf} and SLiC-HF \citep{zhao2023slic}, which apply ranking loss; IPO \citep{azar2024general}, which avoids the pair-wise reward assumption; CPO \citep{xu2024contrastive} and ORPO \citep{hong2024orpo}, which incorporate the SFT objective into the loss; KTO \citep{ethayarajh2024kto}, which does not require training on paired datasets; and R-DPO \citep{park2024disentangling} and SimPO \citep{meng2024simpo}, which use a length-normalized reward formulation. For self-rewarding online optimization methods, we compare self-rewarding(SR) \citep{yuan2024self}, which use a single model for both policy and judgment. The prompts used for judgment and scoring are provided in the Appendix.

\begin{table}[!t]
\centering
\caption{Comparison of Baseline for AlpacaEval 2.0 and MT-Bench. LC and WR refer to Length-Controlled Win Rates and Win Rates, respectively. SR denotes self-rewarding \citep{yuan2024self}, which we have reproduced on Mistral-7B and Mistral-Instruct-7B. Iter1-Iter3 represent the first to third iterations. In our method, we use the SimPO \citep{meng2024simpo}'s DPO loss for training. 
The results of the offline preference optimization algorithm are copied from \citet{meng2024simpo}}.
\label{tab:main}
\begin{adjustbox}{width=\textwidth}
\begin{tabular}{@{}llcclclcclc@{}}
\toprule
             &  &      & \textbf{Mistral-7B}      & \multicolumn{1}{c}{\textbf{}} & \textbf{}         &           &  &    \textbf{Mistral-Instruct-7B}           & \multicolumn{1}{c}{} &                   \\ \midrule
             &  & \multicolumn{2}{c}{\textbf{AlpacaEval 2.0}} & \textbf{}                     & \textbf{MT-Bench} & \textbf{} & \multicolumn{2}{c}{\textbf{AlpacaEval 2.0}}   &                      & \textbf{MT-Bench} \\ \cmidrule(lr){3-4} \cmidrule(lr){6-6} \cmidrule(lr){8-9} \cmidrule(l){11-11} 
             &  & LC (\%)                     & WR (\%)                &                               &        &           & LC (\%)                           & WR (\%)            &                      &        \\ \midrule
SFT          &  & 8.4                     & 6.2              &                               & 4.8               &           & 17.1                         & 14.7          &                      & 6.2               \\ \midrule
RRHF          &  & 11.6                    & 10.2             &                               & 5.4               &           & 25.3                         & 24.8          &                      & \textbf{6.5}               \\
SLiC-HF          &  & 10.9                    & 8.9             &                               & 5.8               &           & 24.1                         & 24.6          &                      & \textbf{6.5}               \\

DPO          &  & 16.8                    & 12.8             &                               & 5.8               &           & 26.8                         & 24.9          &                      & 6.3               \\
IPO          &  & 11.8                    & 9.4              &                               & 5.5               &           & 20.3                         & 16.2          &                      & 6.4               \\
CPO          &  & 9.8                   & 8.9             &                               & 5.4               &           & 23.8                         & 28.8          &                      & 6.3               \\
KTO          &  & 13.1                    & 9.1              &                               & 5.4               &           & 23.6                         & 17.9          &                      & 6.4               \\
ORPO         &  & 14.7                    & 12.2             &                               & 5.8               &           & 24.5                         & 20.8          &                      & 6.4               \\
R-DPO        &  & 17.4                    & 12.8             &                               & 5.9               &           & 24.5                         & 16.1          &                      & 6.2               \\
SimPO        &  & 21.5                    & 20.1             &                               & \textbf{6.0}      &           & 30.4                   & 34.2          &                      & 6.4               \\ \midrule
SR-iter1     &  &   19.9                   &  17.9              &                                       &    5.6               &         &            26.7                    &   25.4            &                      &     6.1              \\
SR-iter2     &                      &    20.8              &        18.5                       &                   &   5.5        &      &     27.9          &            26.8          &   &   6.3             \\
SR-iter3     &  &   19.7                       &   17.6               &                               &       5.5            &                             &        27.0         &       26.0                     &    & 6.3               \\ \midrule
Ours-iter1 &  & 23.4                    & 21.6             &                               & 5.8               &           & 32.5                         & 37.5          &                      & 6.4               \\
Ours-iter2 &  & 24.2                    & 23.5             &                               & \textbf{6.0}      &           & 33.9                         & 38.7          &                      & \textbf{6.5}               \\
Ours-iter3 &  & \textbf{25.9}           & \textbf{24.9}    &                               & \textbf{6.0}      &           & \textbf{34.5}                & \textbf{42.0} &                      & 6.4      \\ \bottomrule
\end{tabular}

\end{adjustbox}
\end{table}

\subsection{Result}

As presented in Table \ref{tab:main}, our method shows significant improvements on AlpacaEval 2.0 and comparable performance on MT-Bench. It is evident that by the first iteration, our method significantly outperforms the current state-of-the-art offline method, SimPO. On AlpacaEval 2.0, our method achieves nearly a 2\% improvement over SimPO for both Mistral-7B and Mistral-Instruct-7B in the first iteration. For self-rewarding methods, the results are suboptimal due to the limited judgment capability of smaller models, which hinders their ability to rank responses effectively, resulting in noisy preference data. As the number of iterations increases, AlpacaEval 2.0 scores gradually improve, ultimately surpassing SimPO by \textbf{4\%}. For Mistral-7B and Mistral-Instruct-7B, this yields scores of \textbf{25.9\%} and \textbf{34.5\%}, respectively. 
For MT-Bench, we find that we achieve results comparable to SimPO. As \cite{meng2024simpo} mentions, minor differences between methods on MT-Bench are possibly a result of randomness, influenced by the limited scale of its evaluation data and the use of a single-instance scoring protocol.

\subsection{Ablation study}
\paragraph {\textbf{Arithmetic control of the optimality gap}} To analyze the effectiveness of each component, we conduct extensive experiments on Mistral-7B. We examine the impact of arithmetic control of the optimality gap by fixing the rejected response scores across all iterations and performing a set of experiments, as shown in Table \ref{tab:score}.
From the table, it is evident that removing arithmetic control results in no improvement in performance for iter2 and iter3, with AlpacaEval 2.0 LC remaining around 23\% and MT-Bench at 5.9. However, with arithmetic control, the final AlpacaEval 2.0 score increases to 25.9\% and MT-Bench to 6.0. This demonstrates that arithmetic control of the optimality gap is crucial. In the later stages of training, the model requires more challenging hard negatives. Exposure to easier cases with large gaps is not effective for learning. Human preferences are complex and difficult to model with simple methods. Gradually reducing the gap during training helps the model focus on hard negatives, allowing it to better capture the nuances of human preferences.
\paragraph {\textbf{Inference on chosen prompt}}
To verify whether the gain brought by the chosen prefix can be directly obtained from inference, we apply the chosen prompt directly to the reference model during the first round of inference. The results show a slight improvement in AlpacaEval 2.0, but it still falls short of the results obtained after the first round of training. For MT-Bench, the results decrease. Upon examining the model outputs, we find that adding the chosen prefix during inference leads to outputs that include additional information beyond the response, such as ``Okay, here is a 10-score answer'' or ``The 10-score answer is as follows.'' This extra information results in GPT-4 assigning lower scores during evaluation. When generating the preference dataset, we specifically use regular expressions to remove responses containing redundant information, thereby avoiding issues related to extra information.

\paragraph {\textbf{Analysis experiment}}

We evaluate the chosen response and rejected response for each round using an open reward model\footnote{https://huggingface.co/berkeley-nest/Starling-RM-7B-alpha} to see if the results aligned with our expectations. 
We analyze this from two perspectives: first, whether the same model could generate fine-grained quality assessments based on the responses. From Figure \ref{fig:bar2}, it can be observed that the same model generates responses of varying quality with different prefixes. Higher prefix scores lead to higher quality responses, and lower prefix scores result in lower quality responses. Such post hoc experiments demonstrate that we can design different prefix scores to generate preference datasets of varying quality.
Additionally, we observe that as iterations increase, both the reward scores for chosen and rejected responses rise, with the gap between them gradually narrowing, as shown in Figure \ref{fig:bar3}. This could be due to two possible reasons: firstly, the model's capability improves over time, becoming more aligned with human preferences, and thus generating higher quality responses; secondly, we reduce the prefix score difference between chosen and rejected responses, leading to more negatives being generated in the later stages of training. We discuss the advantages of this design in Section \ref{sec:iter}.

\begin{figure}[!t]
    \centering
     \includegraphics[width=0.6\linewidth]{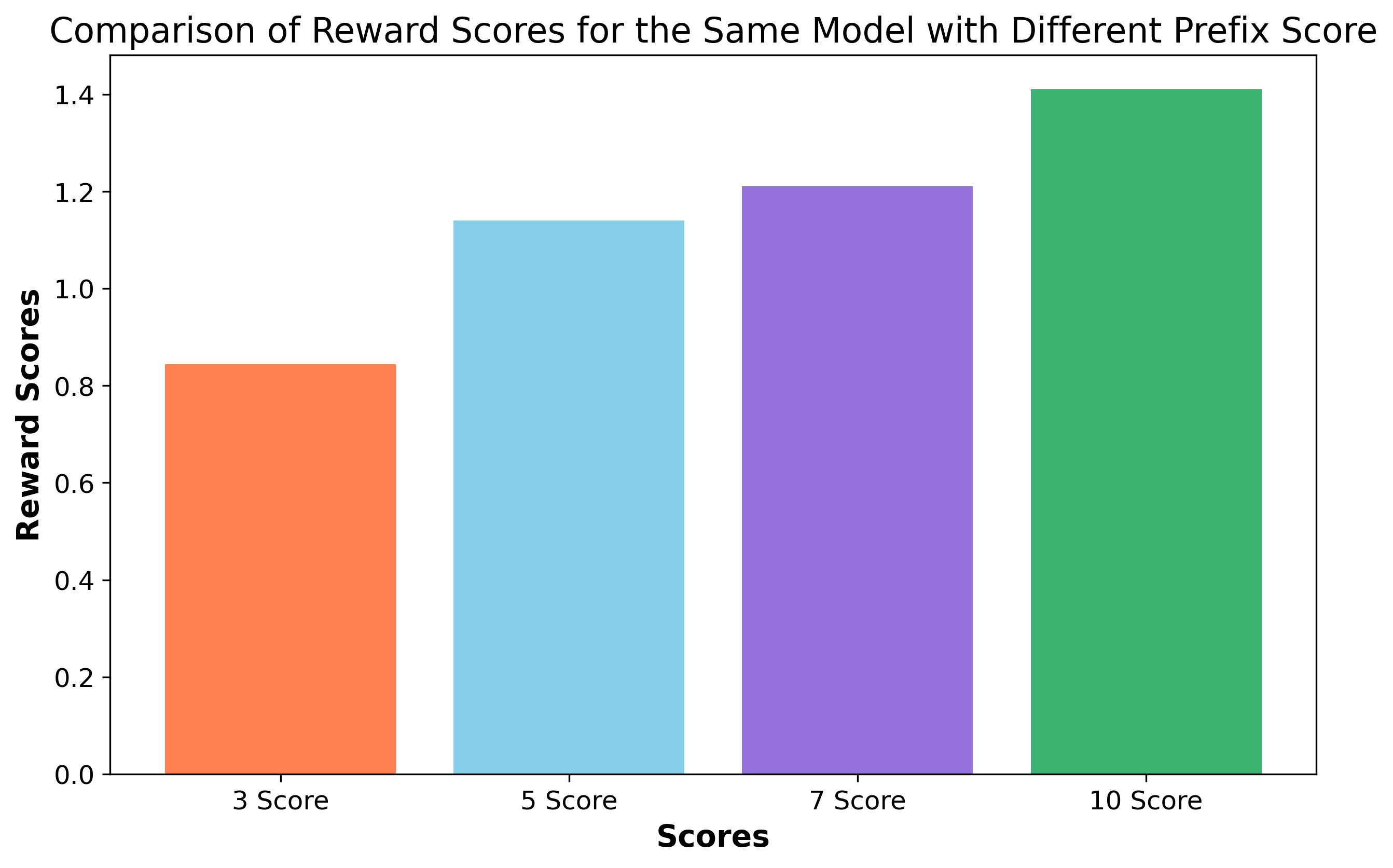}
    \caption{Average reward scores of responses generated with different prefix scores for the same model. The model used here is the reference model from the first iteration.}
        \label{fig:bar2}
\end{figure}

\begin{figure}[!t]
    \centering
     \includegraphics[width=0.6\linewidth]{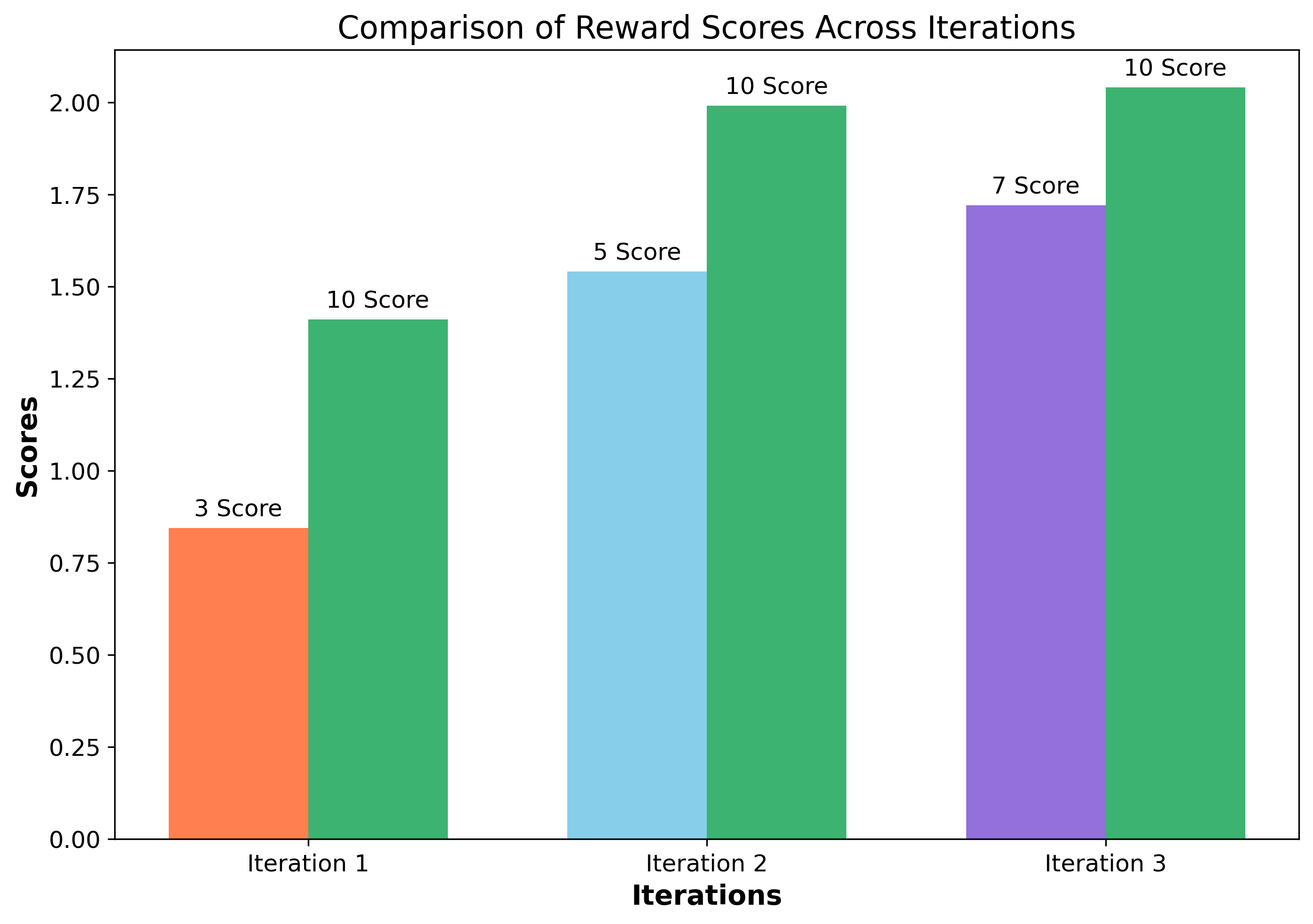}
    \caption{Average reward scores of chosen responses and rejected responses across different iterations. For each iteration, rejected responses are shown on the left and chosen responses on the right. The rejected scores for iterations 1, 2, and 3 are 3, 5, and 7, respectively.}
    \label{fig:bar3}
\end{figure}

\begin{table}[!t]
\centering
\caption{Comparison of results without arithmetic control and our method. The removal of arithmetic control refers to fixing the rejected score at 3. In our method, the rejected scores for each iteration are set to 3, 5, and 7, respectively.}
\label{tab:score}
\begin{tabular}{@{}lccc@{}}
\toprule
                            & \multicolumn{2}{c}{AlpacaEval 2.0} & MT-Bench \\
                            & LC  (\%)              & AC  (\%)               &          \\ \midrule
w/o arithmetic control-iter1 & 23.4            & 21.6           & 5.8      \\
w/o arithmetic control-iter2                          & 22.8            & 20.4           & 5.9      \\
w/o arithmetic control-iter3                            & 23.7            & 21.4           & 5.9      \\ \midrule
Ours-iter1                        & 23.4            & 21.6           & 5.8      \\
Ours-iter2                           & 24.2            & 23.5           & 6.0      \\
Ours-iter3                           & 25.9            & 24.9           & 6.0      \\ \bottomrule
\end{tabular}
\end{table}

\begin{table}[!t]
\centering
\caption{Comparison of inference results with the chosen prompt and our method.}
\label{tab:infer}
\begin{tabular}{@{}lccc@{}}
\toprule
                          & \multicolumn{2}{c}{AlpacaEval 2.0} & MT-Bench \\ \midrule
                          & LC  (\%)                & AC (\%)                &          \\ \midrule
SimPO                     & 21.5            & 20.1           & 6.0      \\
Inference + Chosen prefix & 22.5            & 21.9           & 5.5      \\ \midrule
Ours-iter1                      & 23.4            & 21.6           & 5.8      \\
Ours-iter2                         & 24.2            & 23.5           & 6.0      \\
Ours-iter3                          & 25.9            & 24.9           & 6.0      \\ \bottomrule
\end{tabular}
\end{table}

\section{Conclusion}
In this paper, we propose a novel only-prompting self-rewarding online preference optimization method for LLM alignment. Our method relies solely on generation capability to create preference datasets without needing a discriminator, addressing the issue of poor judgment capability in small models and significantly saving on inference and API costs. Additionally, we provide a mathematical proof that preference datasets generated with different prefix scores exhibit quality differences, which can be used for DPO training.
Furthermore, by applying fine-grained score control to the responses, we generate preference datasets of varying quality. In the later stages of training, we gradually reduce the quality gap between chosen and rejected responses, which forces the model to better align with complex human preferences. Extensive experiments on two widely used benchmarks, AlpacaEval 2.0 and MT-Bench, demonstrate the superiority of our method.
\newpage
\bibliography{iclr2025_conference}
\bibliographystyle{iclr2025_conference}

\end{document}